\newcommand{\keywords}[1]{\par\addvspace\baselineskip
\noindent\keywordname\enspace\ignorespaces#1}
\begin{document}

\mainmatter  

\title{A Reverse Hex Solver}

\titlerunning{A Rex Solver}

\author{Kenny Young%
\thanks{The authors gratefully acknowledge the support of NSERC.}
\and Ryan B.\ Hayward}
\authorrunning{A Rex Solver}

\institute{Dept.\ of Computing Science, UAlberta, Canada,
hayward@ualberta.ca, \url{http://webdocs.cs.ualberta.ca/~hayward/}}

\maketitle

\begin{abstract}
We present Solrex, an automated solver for the game of Reverse Hex.
Reverse Hex, also known as Rex, or Mis\`{e}re Hex,
is the variant of the game of Hex in which the player who joins
her two sides loses the game.
Solrex performs a mini-max search of the state space using
Scalable Parallel Depth First Proof Number Search, 
enhanced by the pruning of inferior moves and 
the early detection of certain winning strategies.

Solrex is implemented on the same code base as the Hex program Solver,
and can solve arbitrary positions on board sizes up to 6$\times$6,
with the hardest position taking less than four hours on four
threads.
\keywords{Hex, Reverse Hex, mis\`{e}re, Rex, solver, combinatorial game theory, proof number search}
\end{abstract}

\section{Introduction}
In 1942 Piet Hein invented the two-player board game
now called Hex \cite{Hein42}.
The board is covered with a four-sided array of hexagonal cells.
Each player is assigned two opposite sides of the board.
Players move in alternating turns.
For each turn, a player places
one of their stones on an empty cell.
Whoever connects their two sides with a path of their stones is the winner.

In his 1957 {\em Scientific American} Mathematical Games column, 
{\em Concerning the game of Hex, which may be played
on the tiles of the bathroom floor},
Martin Gardner mentions the mis\`{e}re version of Hex
known as Reverse Hex, or Rex, or Mis\'{e}re Hex:
whoever joins their two sides {\it loses} \cite{Gard57a}.
See Figure~\ref{fig:1}.

\begin{figure}[htb]\centering
\includegraphics[scale=1.2]{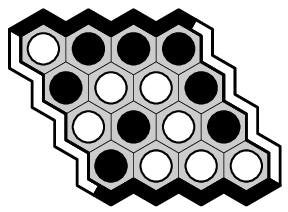}
\includegraphics[scale=1.2]{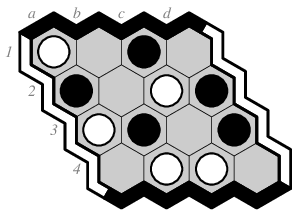}
\caption{Left: the end of a Rex game. White has joined both White sides, 
so loses. \ \ Right: a Rex puzzle by Ronald J.\ Evans. White to play and win.\cite{Gard75a}}
 \label{fig:1}  
\end{figure}

So, for positive integers $n$,
who wins Rex on $n$$\times$$n$ boards?
Using a strategy-stealing argument,
Robert O.\ Winder showed that
the first (resp.\ second) wins when $n$ is even (odd) \cite{Gard57a}.
Lagarias and Sleator further showed that, for all $n$,
each player has a strategy that can avoid defeat
until the board is completely covered \cite{LaSl99}.

Which opening (i.e.\ first) moves wins? 
Ronald J.\ Evans showed that for $n$ even,
opening in an acute corner wins \cite{Evan74}.
Hayward et al.\ further showed that, for $n$ even and at least 4,
opening in a cell that touches an acute corner cell and
one's own side also wins \cite{HTH12}.

The results mentioned so far prove the existence of winning
strategies. But how hard is it to {\em find} such strategies?
In his 1988 book Gardner commented that
``{\em 4$\times$4 [Rex] is so complex that a winning line of play
for the first player remains unknown.}
\cite{Gard75a,Gard88}.
In 2012, based on easily detected pairing strategies,
Hayward et al.\ explained how to find winning strategies
for all but one (up to symmetry) 
opening move on the 4$\times$4 board \cite{HTH12}.
\vspace*{.1cm}

In this paper, we present Solrex, an automated Rex solver that
solves arbitrary Rex positions on boards up to 6$\times$6.
With four threads,
solving the hardest 6$\times$6 opening takes under 4 hours;
solving all 18 (up to symmetry) 6$\times$6 openings takes about 7 hours.

The design of Solrex is similar to the design of the Hex program Solver.
So, Solrex searches the minimax space of gamestates using
Scalable Parallel Depth-First Proof Number Search,
the enhanced parallel version by Pawlewicz and Hayward
\cite{PawH13}
of Focussed Depth-First Proof Number Search
of Arneson, Hayward, and Henderson \cite{AHH10}.
Like Solver,
Solrex enhances the search by inferior move pruning and early win detection.
The inferior move pruning is based on Rex-specific theorems.
The win detection is based on Rex-specific virtual connections
based on pairing strategies.

In the next sections we explain
pairing strategies,
inferior cell analysis,
win detection,
the details of Solrex,
and then present experimental results.

\section{Death, pairing, capture, joining}
Roughly, a dead cell is a cell that
is useless to both players,
as it cannot contribute to 
joining either player's two sides.
Dead cells can be pruned from the Rex search tree.
Related to dead cells are captured cells,
roughly
cells that are useless to just one player 
and so can be colored for the other player.
In Hex, each player wants to capture cells;
in Rex, each player wants to force the opponent to capture cells.
In Rex, such opponent-forced capture can be brought about
by pairing strategies.
As we will see in a later section,
pairing strategies can also be used to force the opponent
to join their two sides.

Before elaborating on these ideas,
we give some basic terminology.
Let $\overline{X}$ denote the opponent of player $X$.

\begin{figure}[bht]\centering
\includegraphics[scale=1.5]{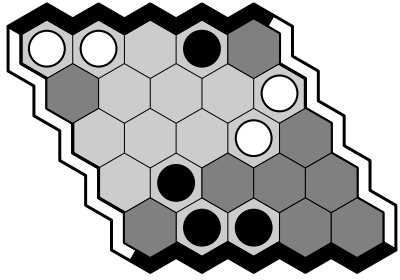}
\caption{Shaded cells are dead. All other uncolored cells are live.}
\label{fig:dead}  %
\end{figure}

For a given position,
{\em player X colors cell c}
means that player $X$ 
moves to cell $c$,
i.e.\ places a stone of her color on cell $c$.
A cell is {\em uncolored} if it is unoccupied.
To {\em X-fill} a set of cells is to $X$-color each cell in the set;
to {\em fill} a set is either to $X$-fill or $\overline{X}$-fill the set.

A {\em state} $S=P^X$ is a position $P$ 
together with the specified player $X$ to move next.
The {\em winner} of $S$ is whoever has a winning strategy from $S$.

For a position $P$ and a player $X$,
a {\em X-joinset} is a minimal set of uncolored cells
which when {\em X}-colored joins {\em X}'s two sides;
a {\em joinset} is an $X$-joinset or an $\overline{X}$-joinset;
an uncolored cell is {\em live}
if it is in a joinset, otherwise it is {\em dead};
a colored cell is {\em dead} if uncoloring it would make it dead.

For an even size subset $C$ of uncolored cells
of a position or associated state,
a {\em pairing} $\Pi$ is a partition of $C$ 
into {\em pairs}, i.e.\ subsets of size two.
For a cell $c$ in a pair $\{c,d\}$, cell $d$ is $c$'s {\em mate}.
For a state $S$, a player $Y$, and a pairing $\Pi$,
a {\em pairing strategy} is a strategy for $Y$
that guarantees that, in each terminal position reachable from $S$,
at most one cell of each pair of $\Pi$ will be $Y$-colored.

For a state $S=P^X$, {\em Last} is that player who plays
last if the game ends with all cells colored, and {\em Notlast}
is the other player, i.e.\ she who plays second-last if the game
ends with all cells uncolored.
So, Last (Notlast) is whoever plays next
if and only if
the number of uncolored cells is odd (even).
For example, for $S=P^X$ with $P$ the empty 6$\times$6 board,
Last is $\overline{X}$ and Notlast is $X$,
since $X$ plays next and $P$ has 36 uncolored cells.

\begin{theorem}
For state $S$ and pairing $\Pi$, each player
has a pairing strategy for $S$.
\end{theorem}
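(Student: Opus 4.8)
The plan is to exhibit, for an arbitrary player $Y$, an explicit pairing‑response strategy and then check it meets the definition. First I would reduce to the game in which play continues until every cell is colored: coloring further cells can only increase the number of $Y$‑colored cells in a pair, so a strategy that keeps each pair of $\Pi$ with at most one $Y$‑stone at the full board also does so at every earlier terminal position. Thus let $N$ be the number of uncolored cells of $P$ and work in the $N$‑move game. Write $C=\bigcup\Pi$, let $D$ be the remaining cells, say a move \emph{opens} a pair if it colors one cell of a pair whose other cell is still uncolored, and call a pair \emph{$Y$-half} if exactly one of its cells is colored and that cell is $Y$-colored.

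The strategy $\sigma_Y$ is: on $Y$'s turn, if $\overline{Y}$'s last move opened a pair, color the still‑uncolored cell of that pair; otherwise color any uncolored cell that is not the unique uncolored cell of some $Y$-half pair, preferring uncolored cells of $D$. Two things are then immediate. (1) A pair never becomes colored with two $Y$-stones, since that could only happen when $Y$ colors the last uncolored cell of a $Y$-half pair, which neither clause of $\sigma_Y$ does. (2) $Y$ answers each of $\overline{Y}$'s opening moves on the spot, so no pair ever becomes colored with two $\overline{Y}$-stones; hence every completely‑colored pair is split one‑to‑one between the players. Given (1), all that remains is to show $\sigma_Y$ is always playable, i.e.\ that the \emph{stuck} configuration never occurs: $Y$ to move, at least one uncolored cell, every uncolored cell the unique uncolored cell of a $Y$-half pair, and $\overline{Y}$'s last move not an opening move.

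This last point is the crux, and it is where the Last/Notlast parity enters. In a stuck configuration with $u\ge 1$ uncolored cells, all of $D$ is colored, there are exactly $u$ $Y$-half pairs, and by (2) every other pair is split; counting stones and using whether $Y$ moved first or second gives $d_{\overline{Y}}-d_Y=u$ (if $Y$ moved first) or $d_{\overline{Y}}-d_Y=u+1$ (if $Y$ moved second), where $d_Y,d_{\overline{Y}}$ count the $Y$- and $\overline{Y}$-colored cells of $D$. On the other hand $\sigma_Y$ grabs a cell of $D$ whenever it is not forced to respond and one is available, so each of $\overline{Y}$'s moves into $D$ except the very last $D$-move of the game is answered at once by a $Y$-move into $D$; this forces $d_{\overline{Y}}\le d_Y+1$ throughout. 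These are incompatible when $Y$ moves second, and when $Y$ moves first they force $u=1$ and $d_{\overline{Y}}=d_Y+1$; that residual case is killed by observing that then $N$, hence $|D|$, is odd, the last $D$-move belongs to $\overline{Y}$, and the number of non‑$D$ moves preceding it is odd by the turn parity but even because up to that point $Y$ never opens a pair, so those moves split into (opening, response) pairs. I expect this stuck‑configuration analysis — in particular pinning down ``$Y$ keeps pace in $D$'' and the immediate‑response property as invariants of the reachable positions, and disposing of the odd residual case — to be the main obstacle; the reduction to the full board, facts (1)–(2), and running the same construction for $\overline{Y}$ are routine.
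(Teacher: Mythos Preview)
Your argument is correct, and the underlying strategy---answer the opponent's moves into $\Pi$ with the mate, otherwise prefer cells outside $\Pi$---is the same idea the paper uses. The organisation differs, though. The paper splits on whether $Y$ is Last or Notlast and states a separate rule for each case, then explicitly leaves the feasibility check (``Proving that this is always possible is left to the reader''). You instead give one unified rule and then do the work the paper omits: you prove the no-stuck invariant by tracking $d_{\overline{Y}}-d_Y$ and handling the residual $u=1$ case with a parity count. That buys you a self-contained proof at the cost of extra bookkeeping; the paper's case split lets the reader see immediately \emph{why} parity matters (Last never needs to enter $\Pi$ voluntarily, Notlast may have to open a pair once) without spelling out the invariant. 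One small simplification in your residual case: once you know the $D$-move subsequence has no two consecutive $\overline{Y}$-moves and $d_{\overline{Y}}=d_Y+1$, that subsequence must both start and end with $\overline{Y}$; but $Y$ moves first and $D\neq\emptyset$, so $Y$'s very first move is in $D$, contradicting ``starts with $\overline{Y}$''---you don't need the separate count of non-$D$ moves.
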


\begin{proof}
It suffices to follow these rules.
Proving that this is always possible is left to the reader.

First assume $Y$ is Last.
In response to $\overline{Y}$ coloring a cell in $\Pi$,
$Y$ colors the mate.
Otherwise, $Y$ colors some uncolored cell not in $\Pi$.
Next assume $Y$ is Notlast.
In response to $\overline{Y}$ coloring a cell in $\Pi$ with uncolored mate,
$Y$ colors the mate;
otherwise, $Y$ colors a cell not in $\Pi$;
otherwise (all uncolored cells are in $\Pi$, and
each pair of $\Pi$ has both or neither cell colored),
$Y$ colors any uncolored cell of $\Pi$.
\qed\end{proof}

For a player $X$ and a pairing $\Pi$ with cell set $C$
of a position $P$ or associated state $S=P^Y$,
we say $\Pi$ {\em X-captures C}
if $X$-coloring at least one cell of each pair of $\Pi$ leaves
the remaining uncolored cells of $C$ dead; and we say
$\Pi$ {\em X-joins} $P$
if $X$-coloring at least one cell of each pair of $\Pi$ 
joins $X$'s two sides.

Notice that every captured set (as defined here, i.e.\ for Rex)
comes from a pairing and so has an even number of cells,
as does every $X$-join set.

\begin{figure}[thb]\centering
\includegraphics[scale=1.2]{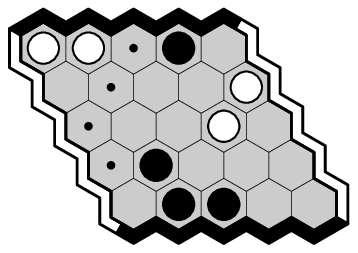}\
\includegraphics[scale=1.2]{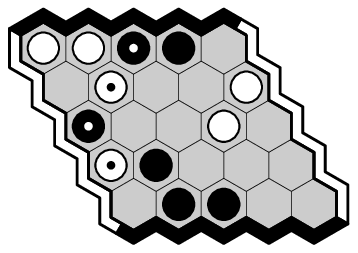}\
\includegraphics[scale=1.2]{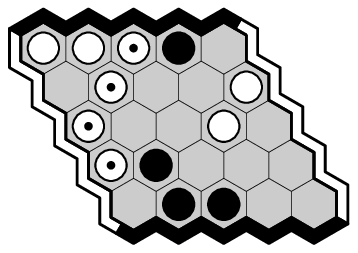}\
\caption{Left: dots show a White-captured set (the top two cells form
one pair, the bottom two form the other).
Middle: Each player has colored one cell from each pair,
and the two Black cells are dead.
Right: original position after filling White-captured cells.}
\label{fig:captured}  %
\end{figure}

\section{Inferior cell pruning and early win
detection}\label{s:prune}
We now present the Rex theorems that allow
our solver to prune inferior moves and detect wins early.

For a position $P$, a player $X$, and a set of cells $C$,
$P+C_X$ is the position obtained from $P$ by $X$-coloring all cells of $C$,
and $P-C$ is the position obtained from $P$ by uncoloring all 
colored cells of $C$. For clarity, we may also write $P-C_X$ in this case where X is the 
player who originally controlled all the cells of $C$.
Similarly, for a state $S=P^Y$, where $Y=X$ or $\overline{X}$,
$S+C_X$ is the state $(P+C_X)^Y$.
Also, in this context, when $C$ has only one cell $c$,
we will sometimes write $c_X$ instead of $\{c\}_X$.

For states $S$ and $T$ and player $X$,
we write $S \ge_X T$ if $X$ wins $T$ whenever $X$ wins $S$,
and we write $S \equiv T$ if the winner of $S$ is the winner of $T$,
i.e.\ if $S\ge_X T$ and $T \ge_X S$ for either player $X$.

An $X$-strategy is a strategy for player $X$.

\begin{theorem}\label{thm:evenfill}
For an even size set $C$ of uncolored cells of a state $S$,
$S \ge_X S+C_X$.
\end{theorem}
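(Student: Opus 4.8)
The goal is to exhibit, from a winning $X$-strategy $\tau$ for $S+C_X$, a winning $X$-strategy $\sigma$ for $S$: in words, forcing $X$ to pre-colour an even set of cells cannot turn an $X$-loss into an $X$-win (the Rex counterpart of the familiar Hex fill-in monotonicity, with the parity caveat that is exactly why $|C|$ must be even). The plan is a shadow-game simulation. Fix $\tau$ and an arbitrary pairing $\Pi$ of $C$, available since $|C|$ is even. While playing $\sigma$ in the real game from $S$, player $X$ privately maintains a shadow game from $S+C_X$ in which $X$ follows $\tau$; a pairing response on $\Pi$ reconciles the two. Initially the positions differ only on $C$: uncolored in the real game, $X$-colored in the shadow.

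The rule for $\sigma$ is: if $\overline{X}$'s last real move was a cell $c\in C$, reply in the real game by colouring its $\Pi$-mate --- still uncolored, since a pair is first entered by an $\overline{X}$ move and immediately completed by $X$'s reply, so $\overline{X}$ never enters a pair twice --- and leave the shadow unchanged; otherwise $\overline{X}$'s last real move lies off $C$ (or play has not begun), so copy that move into the shadow, ask $\tau$ for its reply $e$ --- necessarily off $C$, uncolored in the shadow, hence also uncolored in the real game --- and play $e$ in both games. I would then prove two invariants by induction on the real moves: (i) the real and shadow positions agree off $C$ at corresponding moments, so every prescribed move is legal; and (ii) at every moment the $X$-coloured cells of the real position form a sub-collection of the $X$-coloured cells of the corresponding shadow position, since a $\tau$-reply adds the same cell to both, a mate-reply adds a cell of $C$ that is $X$-coloured throughout the shadow game, and $\overline{X}$'s moves change neither.

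The conclusion follows from (ii): $\tau$ is winning, so $X$ is never joined in the shadow game, and a sub-collection of $X$-stones cannot form an $X$-path when the full collection does not, so $X$ stays unjoined in the real game throughout. The real game is finite, hence ends; whoever is joined when it ends --- the unique joined player when the board is full, or whoever first becomes joined if play halts sooner --- is therefore $\overline{X}$, so $X$ wins $S$. One also checks that play never stalls: once the shadow game has ended, $\overline{X}$ is joined in it using only cells off $C$, which carry the same colours in the real game, so the real game has ended too; hence on any $X$-turn of an unfinished real game, either a mate-reply or a $\tau$-reply is available.

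The real work, I expect, is not any individual claim but pinning down the synchronization: saying precisely which real moves correspond to which shadow moves (the real moves outside pair-exchanges, matched bijectively and in order), seeing that evenness of $|C|$ is exactly what keeps a mate always available when $X$ is pushed into $C$, and disposing of the boundary cases --- the real game terminating in the middle of a pair-exchange, or $\overline{X}$ never playing in $C$ so that $C$ stays empty while every other cell gets coloured. A tidier write-up would first settle $|C|=2$, where $\Pi$ is a single pair, and then iterate, peeling one pair off $C$ at a time; the simulation is identical and the bookkeeping is lighter.
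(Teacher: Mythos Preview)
Your proposal is correct and follows essentially the same approach as the paper: simulate a winning $X$-strategy for $S+C_X$ in a shadow game, and in the real game respond inside $C$ whenever $\overline{X}$ plays there, so that off $C$ the two games coincide. The only cosmetic differences are that the paper says ``move anywhere in $C$'' rather than fixing a pairing, and phrases the invariant as ``$\overline{X}$'s cells in the real terminal position are a superset of those in the shadow'' rather than your dual ``$X$'s cells are a subset''; your version is more explicit about legality, synchronization, and termination, but the underlying argument is the same.
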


\begin{proof}
Assume $\pi^+$ is a winning $X$-strategy for $S^+ = S+C_X$.
Let $\pi$ be the $X$-strategy for $S$ obtained
from $\pi^+$ by moving anywhere in $C$ whenever $\overline{X}$
moves in $C$.
For any terminal position reachable from $S$, 
the set of cells occupied by $\overline{X}$
will be a superset of the cells occupied by $\overline{X}$ in
the corresponding position reachable from $S^+$,
so $X$ wins $S$.
\qed\end{proof}

\begin{theorem}\label{thm:oppocell}
For a position $P$ with uncolored cell $c$,
$(P+c_{\overline{X}})^{\overline{Y}} \geq_{X} P^Y$.
\end{theorem}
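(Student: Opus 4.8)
The plan is to unwind the definition of $\ge_X$: assume $X$ wins $P^Y$ and produce a winning $X$-strategy for $(P+c_{\overline X})^{\overline Y}$. I would split into two cases according to who is on the move in the target state, i.e.\ according to whether $Y=\overline X$ or $Y=X$.

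The case $Y=\overline X$ should be immediate. Then the target state is $(P+c_{\overline X})^{X}$, and since $c$ is uncoloured in $P$ it is one of the legal replies available to $\overline X$ in $P^{\overline X}=P^Y$. A winning $X$-strategy for $P^{\overline X}$ wins against every reply of $\overline X$, in particular against $\overline X$ colouring $c$; restricting that strategy to the subtree reached after this reply gives a winning $X$-strategy for $(P+c_{\overline X})^{X}$.

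The case $Y=X$ is the real content. Now the target is $(P+c_{\overline X})^{\overline X}$, with $\overline X$ to move, and I would argue that every position $\overline X$ can move to is at least as good for $X$ as $P^X$. If $\overline X$ colours an uncoloured (hence $\ne c$) cell $d$, the new state is $(P+\{c,d\}_{\overline X})^{X}=P^X+\{c,d\}_{\overline X}$; applying Theorem~\ref{thm:evenfill} with $\overline X$ in the role of the generic player (legitimate, since that theorem is stated for an arbitrary player) gives $P^X \ge_{\overline X} P^X+\{c,d\}_{\overline X}$, and since exactly one player has a winning strategy in any Rex state, $A\ge_{\overline X}B$ is equivalent to $B\ge_X A$, so $(P+\{c,d\}_{\overline X})^{X}\ge_X P^X$. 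As $X$ wins $P^X$, $X$ therefore wins $(P+\{c,d\}_{\overline X})^{X}$ for every reply $d$, hence $X$ wins $(P+c_{\overline X})^{\overline X}$. The one gap is the boundary case where $\overline X$ has no reply, i.e.\ $P+c_{\overline X}$ is already full; then $(P+c_{\overline X})^{\overline X}$ is terminal and I would finish by hand: the only move from $P^X$ is $c$, so $X$ wins $P^X$ exactly when $\overline X$ is connected in $P+c_X$, and $\overline X$ is then still connected in the larger $\overline X$-colouring of $P+c_{\overline X}$, so $X$ wins there too.

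I expect the main obstacle to be entirely in the $Y=X$ case: one must notice that a single $\overline X$-move converts the lone extra stone at $c$ into an even $\overline X$-filled set $\{c,d\}$ to which Theorem~\ref{thm:evenfill} (read for the opponent) applies, and one must be comfortable passing between ``adding an even own-coloured set cannot help a player'' and ``adding an even opponent-coloured set cannot hurt a player,'' which is just the observation that the two players' winning sets of states are complementary. The $Y=\overline X$ case and the full-board boundary case are routine.
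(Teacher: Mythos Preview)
Your proof is correct and follows essentially the same route as the paper: the $Y=\overline X$ case by restricting a winning strategy to the subtree after $\overline X$ plays $c$, and the $Y=X$ case by letting $\overline X$ make one move $d$ and then invoking Theorem~\ref{thm:evenfill} on the even set $\{c,d\}$. You are in fact slightly more careful than the paper, since you spell out the complementarity step $A\ge_{\overline X}B\Leftrightarrow B\ge_X A$ and handle the full-board boundary case, both of which the paper leaves implicit.
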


\begin{proof}
First assume $Y =\overline{X}$.
Assume $X$ wins $S=P^{\overline{X}}$.
Then, for every possible move from $S$ by $\overline{X}$,
$X$ can win.  In particular, $X$ can win after $\overline{X}$
colors $c$.  So $X$ wins
$(P+c_{\overline{X}})^X$.

Next assume $Y=X$.
Assume $X$ wins $S=P^X$.
We want to show $X$ wins
$S'=(P+c_{\overline{X}})^{\overline{X}}$.
Let $c'$ a cell to which $\overline{X}$ moves from $S'$,
let $C=\{c,c'\}$, and let $S''$ be the resulting
state $(P+C_{\overline{X}})^X$.
$X$ wins $S$ so, 
by Theorem~\ref{thm:evenfill},
$X$ wins $S''$.
So, for every possible move from $S'$, $X$ wins.
So $X$ wins $S'$.
\qed\end{proof}


%

\begin{theorem}\label{thm:ok2pair}
For an $X$-captured set $C$ of a state $S$,
$S+C_X \ge_X S$.
\end{theorem}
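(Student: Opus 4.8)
The plan is to prove the (by determinacy, equivalent) form of the claim that $X$ winning $S$ forces $X$ to win $S^+:=S+C_X$; concretely, I will assume $\overline{X}$ wins $S^+$ and deduce that $\overline{X}$ wins $S$. Fix a pairing $\Pi$ witnessing that $C$ is $X$-captured in $S$, choose one cell from each pair, let $M$ be the set of chosen cells and $M'=C\setminus M$ the set of their mates.

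The key preliminary observation is that every cell of $C$ is dead in $S^+$, and stays dead in every position arising in the games considered below. By the capturing hypothesis applied to $M$, the cells of $M'$ are dead in $S+M_X$; since $S^+$ is obtained from $S+M_X$ by $X$-colouring those dead cells, and colouring a dead cell (for either player) changes neither the winner nor the dead/live status of any other cell, the cells of $M'$ are dead in $S^+$; applying the hypothesis instead to $M'$ shows the cells of $M$ are dead in $S^+$ as well. Because $C$ is entirely $X$-coloured in $S^+$, only cells outside $C$ are ever played from $S^+$, and colouring a cell outside $C$ preserves the deadness of the cells of $C$; likewise, re-colouring a dead cell keeps it and the others dead. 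So $C$ remains a set of dead cells throughout.

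Now let $\tau$ be a winning $\overline{X}$-strategy from $S^+$. Playing from $S$, $\overline{X}$ runs a ``shadow'' game started at $S^+$ under $\tau$: each move of $X$ outside $C$ is copied into the shadow game, and $\tau$'s reply there (always outside $C$, since $C$ is frozen $X$-coloured in the shadow game) is copied back into the real game; a move by $X$ on a cell $c\in C$ is answered, in the real game only, by colouring $c$'s $\Pi$-mate, which is still free since $\overline{X}$ has been answering all of $X$'s $C$-moves by mates. The two games then remain synchronised in turn order and identical outside $C$. Since $\tau$ is winning, the shadow game ends with $X$'s two sides joined. The real terminal position agrees with the shadow terminal position except at cells of $C$; as those are dead, re-colouring them does not change who is joined, so $X$ is joined in the real terminal position too. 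Similarly, $\overline{X}$'s only cells beyond its shadow cells are dead cells of $C$, which cannot lie on an $\overline{X}$-joining path, so the real game never ends earlier with $\overline{X}$ joined. Hence the real game ends with $X$ joined, i.e.\ $\overline{X}$ wins $S$.

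The step I expect to be the main obstacle is the preliminary observation together with the way it is invoked: one leans throughout on the structural facts that colouring (or re-colouring, or uncolouring) a dead cell is inert for the winner and for the deadness of other cells, and that a dead cell never lies on a minimal joining path. These are the standard facts underlying ``dead-cell pruning'' in Hex, but stating and applying them with care is the real content here --- and it is precisely where the \emph{capturing} hypothesis does its work, consistently with Theorem~\ref{thm:evenfill}, which for a general even set gives only the reverse comparison. The turn-order bookkeeping in the simulation, by contrast, is routine.
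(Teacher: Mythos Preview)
Your proof is correct and follows essentially the same route as the paper's. Both assume $\overline{X}$ wins $S^+=S+C_X$, build an $\overline{X}$-strategy on $S$ by playing the $S^+$-strategy outside $C$ and the $\Pi$-pairing inside $C$, and then use the capture hypothesis to rule out any $\overline{X}$-joining path through $C$ at termination. The only cosmetic difference is that you front-load the capture hypothesis into the observation ``every cell of $C$ is dead in $S^+$'' and then invoke the invariance of joining under re-colouring dead cells, whereas the paper applies the capture definition directly at the terminal position (after $X$-filling the uncolored cells of $C$, the $\overline{X}$-coloured cells of $C$ become dead and can be stripped from any $\overline{X}$-joining set). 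These are two phrasings of the same argument.
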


\begin{proof}
Assume $\overline{X}$ wins $S^+=S+C_X$ with strategy $\pi^+$. 
We want to show that $\overline{X}$ wins $S$.
Let $\Pi$ be an $X$-capture pairing for $C$, and
modify $\pi^+$ by adding to it the $\Pi$ pairing strategy for $\overline{X}$.

Let $Z$ be a terminal state reachable from $S$ by following $\pi$.
Assume by way of contradiction that $Z$ has an
$\overline{X}$-colored set of cells joining $\overline{X}$'s two sides.
If such a set $Q^*$ exists, then such a set $Q$ exists in which
no cell is in $C$.
(On $C$ $\overline{X}$ follows a $\Pi$ pairing, so in $Z$
at most one cell of each pair of $\Pi$ is $\overline{X}$-colored.
Now $X$-color any uncolored cells of $C$. Now at least one cell
of each pair is $X$-colored, and $C$ is $X$-captured,
so each $\overline{X}$-colored cell of $C$ is dead,
and these cells can be removed one at a time from $Q^*$ while still
leaving a set of cells that joins $\overline{X}$'s two sides.
Thus we have our set $Q$.)
But then the corresponding state $Z^+$ reachable from $S^+$
by following $\pi^+$ has the same set $Q$, contradicting the fact that
$\overline{X}$ wins $S^+$.
\qed\end{proof}

\begin{corollary}\label{cor:capfill}
For an $X$-captured set $C$ of a state $S$, $S \equiv S+C_X$.
\end{corollary}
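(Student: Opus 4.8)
The plan is to obtain the corollary immediately by sandwiching $S+C_X$ between $S$ and itself, using the two preceding results. Recall that $S \equiv T$ means precisely that $S \ge_X T$ and $T \ge_X S$; since every state has a unique winner, establishing both inequalities for a single player $X$ already forces $S$ and $T$ to have the same winner.

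First I would invoke Theorem~\ref{thm:ok2pair} directly. Since $C$ is an $X$-captured set of $S$, that theorem gives $S+C_X \ge_X S$ with no further work.

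For the reverse direction I would check that $C$ meets the hypotheses of Theorem~\ref{thm:evenfill}. By definition an $X$-captured set comes from a pairing, and the underlying cell set of a pairing is an even-size subset of the \emph{uncolored} cells of the state --- a point already recorded just after the definition of capture (``every captured set $\dots$ has an even number of cells''). Hence Theorem~\ref{thm:evenfill} applies to $C$ and yields $S \ge_X S+C_X$.

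Combining the two inequalities gives $S \ge_X S+C_X \ge_X S$, that is, $S \equiv S+C_X$, which is the claim. I do not expect any real obstacle here; the corollary is essentially a bookkeeping step, and the only point requiring (trivial) care is confirming that $C$ is indeed even-sized and uncolored so that Theorem~\ref{thm:evenfill} is legitimately applicable --- both of which are built into the definition of an $X$-captured set.
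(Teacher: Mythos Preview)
Your proposal is correct and matches the paper's own proof exactly: the paper simply writes ``By Theorem~\ref{thm:ok2pair} and Theorem~\ref{thm:evenfill},'' and you have spelled out precisely those two invocations together with the (trivial) verification that $C$ is an even-size set of uncolored cells so that Theorem~\ref{thm:evenfill} applies.
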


\begin{proof}
By Theorem~\ref{thm:ok2pair}
and Theorem~\ref{thm:evenfill}.
\qed\end{proof}

\begin{theorem}
For a player $X$ and a position $P$ with uncolored dead cell $d$,
$(P+d_X)^{\overline{X}} \geq_X P^X$.
A move to a dead cell is at least as good as any other move.
\end{theorem}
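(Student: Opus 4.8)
The plan is to avoid a direct strategy argument and instead combine Theorem~\ref{thm:oppocell}, applied to the cell $d$ itself, with the fact that the colour of a dead stone is immaterial.

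\emph{Step one: show $(P+d_X)^{\overline X} \equiv (P+d_{\overline X})^{\overline X}$.} Since $d$ is dead in $P$ it lies in no joinset of $P$. Hence, in any filled board obtained by completing $P$, if some player's stones join that player's two sides, then a minimal set of the cells filled for that player is a joinset of $P$ and therefore omits $d$; so that player's connecting path uses only stones of $P$ together with stones of this minimal set, none of which is $d$, and the connection --- and thus, since exactly one player connects in a filled board, the winner --- is unchanged if we recolour $d$. Now $(P+d_X)^{\overline X}$ and $(P+d_{\overline X})^{\overline X}$ present the same moves at every node, namely the uncolored cells of $P$ other than $d$, so their game trees are isomorphic; corresponding leaves are filled boards differing only in the colour on $d$, hence have the same winner; so the two states have the same value.

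\emph{Step two: apply Theorem~\ref{thm:oppocell} with $Y := X$ and with its distinguished uncolored cell taken to be $d$.} This gives $(P+d_{\overline X})^{\overline X} \ge_X P^X$. Chaining with Step one: if $X$ wins $P^X$ then $X$ wins $(P+d_{\overline X})^{\overline X}$, hence $X$ wins $(P+d_X)^{\overline X}$; this is exactly $(P+d_X)^{\overline X} \ge_X P^X$. The closing remark of the statement is then immediate, since from $P^X$ a move by $X$ to $d$ produces precisely $(P+d_X)^{\overline X}$.

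The only step with any content is the equivalence in Step one --- more precisely, the assertion that ``$d$ dead in $P$'', which is phrased in terms of joinsets of $P$ alone, really does force the colour of $d$ to be irrelevant in every completion of $P$. The minimality clause in the definition of joinset is exactly what supplies this, so I anticipate this being the one place to argue carefully rather than a genuine obstacle. (If one preferred a self-contained argument one could instead couple a play of $(P+d_X)^{\overline X}$ to a shadow play of a winning $X$-strategy from $P^X$ whose first move is some cell $c$, resynchronising the two plays --- and using that $d$ stays dead --- whenever $\overline X$ plays into $\{c,d\}$; but the route above is shorter and I would prefer it.)
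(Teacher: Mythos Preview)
Your proof is correct and follows essentially the same route as the paper: first argue that recolouring a dead cell does not change the winner, giving $(P+d_X)^{\overline X}\equiv(P+d_{\overline X})^{\overline X}$, and then invoke Theorem~\ref{thm:oppocell} with $Y=X$. The paper compresses your Step one into the single clause ``coloring a dead cell is equivalent to opponent-coloring the cell,'' so your version is simply a more fleshed-out form of the same argument.
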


\begin{proof}
Coloring a dead cell is equivalent to opponent-coloring the cell.
So this theorem follows by Theorem~\ref{thm:oppocell}.
\qed\end{proof}

\begin{theorem}
For a position $P$ with uncolored cells $c,k$ with $c$ dead in $P+k_X$,
$(P+c_X)^{\overline{X}}\geq_X (P+k_X)^{\overline{X}}$.
Prefer victim to killer.
\end{theorem}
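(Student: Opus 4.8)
The plan is a direct strategy argument: I would mirror a winning $X$-strategy from the ``killer'' state onto the board of the ``victim'' state, with the cells $c$ and $k$ swapping roles. Unwinding $\geq_X$, it suffices to show that $X$ wins $(P+c_X)^{\overline{X}}$ whenever $X$ wins $(P+k_X)^{\overline{X}}$; so fix a winning $X$-strategy $\pi$ for $(P+k_X)^{\overline{X}}$.

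The one auxiliary fact I would use is that a dead cell of a position stays dead after arbitrarily many further cells are coloured, by either player: colouring a cell only shrinks the family of joinsets and can never insert a previously useless cell into one. In particular $c$, dead in $P+k_X$, is dead in every position reachable from $(P+k_X)^{\overline{X}}$.

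Now the set-up. The states $(P+c_X)^{\overline{X}}$ and $(P+k_X)^{\overline{X}}$ both have $\overline{X}$ to move and the same number of uncoloured cells, so a move-for-move matching keeps the side to move in step throughout the game. Let $\phi$ be the involution on cells that swaps $c$ and $k$ and fixes every other cell; $\phi$ carries the uncoloured cells of $P+c_X$ bijectively onto those of $P+k_X$ and carries the $X$-stone $c$ onto the $X$-stone $k$. $X$'s strategy in the real game $(P+c_X)^{\overline{X}}$ is to run a shadow game that starts at $(P+k_X)^{\overline{X}}$: whenever the real $\overline{X}$ plays a cell $z$, the shadow $\overline{X}$ plays $\phi(z)$, $\pi$ answers with a shadow move $w$, and $X$ answers $\phi^{-1}(w)$ in the real game. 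Since $\phi$ stays a bijection between the currently uncoloured cells of the two games at every stage, every such reply is legal, and the two games end at $\phi$-corresponding terminal positions whose $\overline{X}$-stone sets are $\phi$-images of one another.

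Finally I would cash this in. As $\pi$ wins the shadow game, in the shadow terminal position $\overline{X}$'s stones join $\overline{X}$'s two sides; choose an $\overline{X}$-coloured joining set $Q$. Then $Q$ avoids $k$ (forever $X$-coloured in the shadow game) and can be taken to avoid $c$ (dead there, hence in no joinset), so $\phi$ fixes $Q$ pointwise; by the move correspondence the cells of $Q$ are $\overline{X}$-coloured in the real terminal position, and as the board geometry is untouched $Q$ still joins $\overline{X}$'s two sides there, so $X$ wins $(P+c_X)^{\overline{X}}$. The one delicate point, and the only place the hypothesis enters, is exactly this transport of the winning $\overline{X}$-connection across the swap $\phi$: it goes through precisely because $c$ is dead throughout the $(P+k_X)$ line, so the connection never uses the one cell whose role $\phi$ alters; everything else is routine bookkeeping that $\phi$ respects colours and parity at each step.
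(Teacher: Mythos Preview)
Your proof is correct but takes a different route from the paper's. The paper dispatches the result in one line by chaining two earlier lemmas: since $c$ is dead in $P+k_X$, the dead-cell fillin equivalence gives $(P+k_X)^{\overline{X}} \equiv (P+k_X+c_X)^X$, and then Theorem~\ref{thm:oppocell} (the opponent-cell inequality, with the roles of $X$ and $\overline{X}$ reversed) strips off the extra $X$-stone at $k$ while flipping the mover, reaching $(P+c_X)^{\overline{X}}$. So the paper's argument is purely algebraic over the $\geq_X$ order, reusing machinery already in place.

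Your argument is instead a direct strategy-transport via the swap involution~$\phi$ --- which is precisely the technique the paper deploys for the \emph{next} theorem (``prefer vulnerable to opponent killer''), where no such algebraic shortcut is available. Your version is self-contained and makes explicit exactly where the deadness of $c$ enters (it lets the winning $\overline{X}$-connection avoid the one cell whose role $\phi$ alters), whereas the paper buries this inside the fillin equivalence. The trade-off is length, plus one small implicit step at the very end: concluding ``$X$ wins'' from ``$\overline{X}$ is joined'' in the real terminal position tacitly invokes the Hex no-draw property (on a full board exactly one player is joined), which you should state if you want the argument to be fully airtight.
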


\begin{proof}
$(P+k_X)^{\overline{X}} \equiv (P+k_X+c_X)^X \geq_{X} (P+c_X)^X$.
\qed\end{proof}

\begin{theorem}
For a position $P$ with uncolored cells $c,k$ with
$c$ dead in $P+k_{\overline{X}}$,
$(P+c_X)^{\overline{X}}\geq_X (P+k_X)^{\overline{X}}$.
Prefer vulnerable to opponent killer.
\end{theorem}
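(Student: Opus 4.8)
The plan is to follow the template of the preceding theorem (``prefer victim to killer''), modified to reflect that here $c$ dies only \emph{after an opponent move} on $k$, not after an $X$-move on $k$. So I aim for a chain of the form $(P+k_X)^{\overline X} \equiv \cdots \ge_X (P+c_X)^{\overline X}$ in which each link is either an instance of Theorem~\ref{thm:oppocell}, an instance of the dead-cell move theorem (``a move to a dead cell is at least as good as any other move'') together with its trivial converse, or an equality obtained by recoloring or deleting a dead cell.

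First I would record the bookkeeping observation that, since $c$ is dead in $P+k_{\overline X}$ and a colored cell is by definition dead exactly when it is dead after uncoloring, $c$ is also dead in $P+c_X+k_{\overline X}$ and in $P+c_{\overline X}+k_{\overline X}$; hence in those positions $c$ may be freely recolored, and a move onto $c$ there behaves as a null move. Then, starting from the ``killer move'' state $(P+k_X)^{\overline X}$, in which it is $\overline X$ to play, Theorem~\ref{thm:oppocell} lets me bring the discussion to a state in which $\overline X$ has colored $k$; I would then migrate the lone $X$-stone from $k$ to $c$, so as to reach $(P+c_X+k_{\overline X})^X$. This is exactly the point where the hypothesis is used: in $P+c_X+k_{\overline X}$ the cell $c$ is dead, so its color is immaterial and the states $(P+c_X+k_{\overline X})^X$ and $(P+k_X+c_{\overline X})^X$ have the same winner. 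Peeling the now-dead cell $c$ off as a null move returns us to $(P+c_X)^{\overline X}$, closing the chain.

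I expect the crux to be the middle link --- passing between a state in which $X$ owns $k$ and one in which $\overline X$ owns $k$, equivalently justifying the color swap on the pair $\{c,k\}$. Unlike ``prefer victim to killer'', deadness of $c$ is available only on the branch where $k$ is opponent-colored, so one cannot simply invoke the dead-cell filling corollary at the top of the chain; the swap must be carried out inside the sub-game in which $\overline X$ has just played $k$, and one must separately check that every other first reply of $\overline X$ in $(P+c_X)^{\overline X}$ is met by the strategy transported from $(P+k_X)^{\overline X}$. If a clean chain proves elusive, the fallback is an explicit strategy-stealing argument: given a winning $X$-strategy for $(P+k_X)^{\overline X}$, let $X$ play $(P+c_X)^{\overline X}$ while maintaining a shadow game run by that strategy, translating a shadow move on $c$ into a real move on $k$ and conversely, so that the two games coincide as soon as both $c$ and $k$ are filled and agree in value on the branch where $\overline X$ colors $k$ because $c$ is dead there; the parity bookkeeping needed to keep the shadow synchronized is the same as in the earlier proofs.
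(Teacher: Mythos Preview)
Your primary chain-of-inequalities approach breaks at exactly the link you flag as the crux. You assert that because $c$ is dead in $P+k_{\overline X}$, the states $(P+c_X+k_{\overline X})^X$ and $(P+k_X+c_{\overline X})^X$ have the same winner. But deadness of $c$ in the first position only licenses recoloring $c$, yielding $(P+c_{\overline X}+k_{\overline X})^X$; it says nothing about recoloring $k$. To reach $(P+k_X+c_{\overline X})^X$ you would additionally need $k$ dead in $P+c_{\overline X}$, and the hypothesis grants no such symmetry (take $k$ a cut cell and $c$ a pendant cell whose only joinsets run through $k$: then $k_{\overline X}$ kills $c$ but $c_{\overline X}$ does not kill $k$). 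None of Theorems~\ref{thm:evenfill}--\ref{thm:ok2pair} lets you trade an $X$-stone on $k$ for an $\overline X$-stone on $k$ while preserving the turn, so the chain cannot be closed.

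Your fallback \emph{is} the paper's proof: transport a winning $X$-strategy $\pi$ for $(P+k_X)^{\overline X}$ to $(P+c_X)^{\overline X}$ by swapping the roles of $c$ and $k$. What you leave imprecise is why the swapped strategy still wins on the branch where $\overline X$ ends up owning $k$ (there the real and shadow positions do \emph{not} coincide: real has $c_X,k_{\overline X}$, shadow has $k_X,c_{\overline X}$). The paper closes this with a joinset reformulation of the hypothesis: since $\overline X$-coloring $k$ kills $c$, every $X$-joinset of $P$ that contains $c$ must also contain $k$. Hence at any terminal position reached via the swapped strategy in which $k$ is $\overline X$-colored, an $X$-joinset that is fully $X$-colored cannot contain $k$, therefore cannot contain $c$, and is then fully $X$-colored in the corresponding shadow terminal as well---contradicting that $\pi$ wins there. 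That single observation is the missing ingredient that turns your fallback sketch into a complete argument.
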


\begin{proof}
Assume $k$ is a winning move for $X$ from $P^X$,
i.e.\ assume $X$ wins $S=(P+k_X)^{\overline{X}}$.
Consider any such winning strategy $\pi$.
We want to show
$c$ is also a winning move for $X$ from $P^X$,
i.e.\ that $X$ wins $S'=(P+c_X)^{\overline{X}}$.

To obtain a winning $X$-strategy $\pi'$ for $S'$,
modify $\pi$ by replacing $c$ with $k$:
whenever $X$ (resp.\ $\overline{X}$) colors $c$ in $\pi$, 
$X$ ($\overline{X}$) colors $k$ in $\pi'$.
In $P$, $\overline{X}$-coloring $k$ kills $c$:
so in $P$, if some $X$-joinset $J$ contains $c$,
then $J$ must also contain $k$.
But a continuation of $\pi'$ has both $k$ and $c$ $X$-colored
if and only if the corresponding continuation of $\pi$ has
them both $X$-colored.
So, since $X$ wins $S$ following $\pi$,
$X$ wins $S'$ following $\pi'$.
\qed\end{proof}

\begin{theorem}
For a position $P$ with uncolored cell $d$ and 
set $C$ that is $X$-captured in $(P+d_x)^{\overline{X}}$,
for all $c\in C$, 
$(P+c_x)^{\overline{X}} \geq_X (P+d_x)^{\overline{X}}$.
Prefer capturee to capturer.
\end{theorem}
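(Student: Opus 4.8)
The plan is to chain Theorem~\ref{thm:evenfill} with Corollary~\ref{cor:capfill}. By the definition of $\ge_X$ it suffices to show that $X$ wins $(P+c_X)^{\overline{X}}$ whenever $X$ wins $(P+d_X)^{\overline{X}}$; in fact I would establish the stronger relation $(P+c_X)^{\overline{X}} \ge_X (P+d_X)^{\overline{X}}$ directly from those two results, with no further strategy surgery needed.

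First I would fix $c\in C$ and set $E := \{d\}\cup(C\setminus\{c\})$. The key bookkeeping observation is that $d\notin C$: by definition $C$ is a set of \emph{uncolored} cells of $(P+d_X)^{\overline{X}}$, whereas $d$ is $X$-colored there, so $d$ cannot lie in $C$. Hence $E$ has exactly $|C|$ elements, which is even (every $X$-captured set has even size), and every cell of $E$ is uncolored in $P+c_X$ (the cells of $C\setminus\{c\}$ are uncolored in $P$, hence in $P+c_X$ since $c\neq d$ and $c\notin C\setminus\{c\}$, and $d$ is uncolored in $P+c_X$ as well). Since $c\in C$, $c$ is uncolored in $P$, so $P+c_X$ is well defined.

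Next I would apply Theorem~\ref{thm:evenfill} to the state $S=(P+c_X)^{\overline{X}}$ and the even uncolored set $E$, obtaining $S \ge_X S+E_X$. A direct rewriting of the colored cells then gives $(P+c_X)+E_X = P+(\{c\}\cup E)_X = P+(\{d\}\cup C)_X = P+d_X+C_X$, so $S+E_X = (P+d_X)^{\overline{X}}+C_X$. Finally, since $C$ is $X$-captured in $(P+d_X)^{\overline{X}}$, Corollary~\ref{cor:capfill} yields $(P+d_X)^{\overline{X}}+C_X \equiv (P+d_X)^{\overline{X}}$. Combining these, $(P+c_X)^{\overline{X}} = S \ge_X S+E_X = (P+d_X)^{\overline{X}}+C_X \equiv (P+d_X)^{\overline{X}}$, which is the claim, and it holds uniformly for every $c\in C$.

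I do not expect a serious obstacle here once the right even set is chosen; the only points that need care are the parity/membership check that makes $E$ a legal even set of uncolored cells of the state, namely that $d$ is not among the uncolored cells of $C$, and keeping straight that the "$+$" operations on pairwise disjoint cell sets commute, so that $(P+c_X)+E_X$ and $(P+d_X)+C_X$ really denote the same position.
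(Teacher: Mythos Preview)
Your proof is correct and follows essentially the same route as the paper: the paper's one-line argument $(P+c_X)^{\overline{X}} \geq_X (P+C_X+d_X)^{\overline{X}} \equiv (P+d_X)^{\overline{X}}$ is exactly your chain, with the first step being Theorem~\ref{thm:evenfill} applied to your set $E=\{d\}\cup(C\setminus\{c\})$ and the second being Corollary~\ref{cor:capfill}. You have simply made explicit the bookkeeping (that $d\notin C$, that $|E|=|C|$ is even, and that the two fillings commute to the same position) which the paper leaves to the reader.
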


\begin{proof}
$(P+c_X)^{\overline{X}} \geq_X 
(P+C_X+d_X)^{\overline{X}} \equiv
(P+d_X)^{\overline{X}}$.
\qed\end{proof}


Our next results concern {\em mutual fillin},
namely when there are two cells {\em a,b} such that
$X$-coloring $a$  $\overline{X}$-captures $b$ and $\overline{X}$-coloring $b$ 
$X$-captures  $a$. 

\begin{theorem}\label{thm:mufill}
Let $P$ be a position with sets $A,B$
containing cells $a,b$ respectively,
such that $A$ is $X$-captured in $(P+b_{\overline{X}})$, 
and $B$ is $\overline{X}$-captured in $(P+a_X)$.
Then $P\equiv P+a_X+b_{\overline{X}}$.
\end{theorem}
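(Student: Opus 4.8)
The plan is to show that, writing $Q:=P+a_X+b_{\overline X}$, we have $P^Y\equiv Q^Y$ for each choice of player $Y$ to move (this is the natural reading of $P\equiv Q$, and it is coherent since $Q$ has exactly two fewer uncolored cells than $P$). I would first exploit the fact that the two hypotheses of the theorem are carried to one another by the relabeling $(X,a,A)\leftrightarrow(\overline X,b,B)$, under which both $P$ and $Q$ are unchanged. It therefore suffices to prove the \emph{single} implication ``if $X$ wins $P^Y$ then $X$ wins $Q^Y$'' (that is, $P^Y\ge_X Q^Y$) using only the hypothesis that $B$ is $\overline X$-captured in $P+a_X$. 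Running the same argument through the relabeling then gives ``if $\overline X$ wins $P^Y$ then $\overline X$ wins $Q^Y$''; since each state has a unique winner, this second implication is equivalent to ``if $X$ wins $Q^Y$ then $X$ wins $P^Y$'', and the two together give $P^Y\equiv Q^Y$. (So both hypotheses are genuinely used, one for each direction.)

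For $P^Y\ge_X Q^Y$ I would build a chain of the earlier results, in the style of the ``Prefer $\ldots$'' proofs above. Theorem~\ref{thm:oppocell}, applied with the two players' roles interchanged and then contraposed, gives $P^Y\ge_X(P+a_X)^{\overline Y}$. Since $B$ is $\overline X$-captured in $P+a_X$, hence in the associated state, Corollary~\ref{cor:capfill} gives $(P+a_X)^{\overline Y}\equiv(P+a_X+B_{\overline X})^{\overline Y}$. Finally, because $b\in B$ and a captured set consists of cells uncolored in the position where it is captured, the cells of $B\setminus\{b\}$ are uncolored in $P$ and distinct from $a$; so I would strip them off one at a time by repeated application of Theorem~\ref{thm:oppocell}, arriving at a state of the form $(P+a_X+b_{\overline X})^{Y'}=Q^{Y'}$. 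Composing these $\ge_X$ and $\equiv$ steps yields $P^Y\ge_X Q^{Y'}$.

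The one delicate point is keeping track of who moves. Each use of Theorem~\ref{thm:oppocell} reverses the player to move, and the chain above uses it a total of $1+(|B|-1)=|B|$ times; since captured sets have even size, $|B|$ is even, so the net number of reversals is even and $Y'=Y$ — the endpoint is genuinely $Q^Y$, not $Q^{\overline Y}$. This parity accounting — that $b_{\overline X}$ is reached from $P$ across an \emph{even} number of turn-reversing steps, exactly because $|B|$ is even — is the heart of the argument; the remaining checks (that each cell invoked in an application of Theorem~\ref{thm:oppocell} is uncolored where needed, and that $a\notin B$) are routine.
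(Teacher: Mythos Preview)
Your argument is correct. Both you and the paper exploit the $(X,a,A)\leftrightarrow(\overline X,b,B)$ symmetry and chain Theorem~\ref{thm:oppocell} with Corollary~\ref{cor:capfill}, but the organization differs. The paper uses the symmetry to assume $X$ is to move, then proves a move-domination claim---that from $P^X$ the move $a$ $X$-dominates every cell of $A\cup B$---via the chain
\[
(P+\alpha_X)^{\overline X}\le_X(P+b_{\overline X})^{\overline X}\equiv(P+A_X+b_{\overline X})^{\overline X}\le_X(P+a_X+B_{\overline X})^{\overline X}\equiv(P+a_X)^{\overline X},
\]
and then argues that after $X$ plays $a$ and $\overline X$ plays the now-captured $b$ one reaches $Q$. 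You instead establish $P^Y\ge_X Q^Y$ directly for each $Y$ via the shorter chain $P^Y\ge_X(P+a_X)^{\overline Y}\equiv(P+a_X+B_{\overline X})^{\overline Y}\ge_X Q^Y$, and then invoke the symmetry for the reverse inequality. Your route is more self-contained: the paper's passage from ``$a$ dominates the cells of $A\cup B$'' to the full equivalence $P\equiv Q$ is only sketched, whereas your parity bookkeeping on $|B|$ makes the chain close exactly at $Q^Y$. The paper's route, on the other hand, yields the domination information about $a$ as a by-product, which is what one actually wants for move pruning.
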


\begin{proof}
By if necessary relabelling $\{X,a,A\}$ and $\{\overline{X},b,B\}$,
we can assume $X$ plays next.
We claim that $a$ $X$-dominates each cell in $A+B$.
Before proving the claim, observe that it implies the theorem,
since after $X$ colors $a$, all of $B$ is $Y$-captured,
so $Y$ can then color any cell of $B$, in particular, $b$.

To prove the claim,
consider a strategy that $X$-captures $A$ in $P+b_{\overline{X}}$.
Now, for all $\alpha$ in $A+B$,

\begin{align*}
(P+{\alpha}_X)^{\overline{X}} 
  & \leq_X 
  & (P+b_{\overline{X}})^{\overline{X}}  
  & \text{~ ~ ~ ~ (Theorem~\ref{thm:oppocell} twice: remove $\alpha_X$, add
  $b_{\overline{X}}$)} \\
  & \equiv 
  & (P+A_X+b_{\overline{X}})^{\overline{X}}
  & \text{~ ~ ~ ~ (capture)} \\
  & \leq_X
  & (P+a_X+B_{\overline{X}})^{\overline{X}}
  & \text{~ ~ ~ ~ (Theorem~\ref{thm:oppocell}, repeatedly for $X$ and then $\overline{X}$)} \\
  & \leq_X
  & (P+a_X)^{\overline{X}}
  & \text{~ ~ ~ ~ (capture)}\\
\end{align*}
So the claim holds, and so the theorem.
\qed\end{proof}

\begin{theorem}
Let $c$ be any $X$-colored cell
in a position $P$ as described in Theorem~\ref{thm:mufill}.
Then $(P-c+a_X)^{\bar{X}} \geq_X P^{\bar{X}}$.
Prefer filled to mutual fillin creator.
\end{theorem}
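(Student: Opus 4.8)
\noindent The plan is to anchor everything on Theorem~\ref{thm:mufill} and then peel off the two ``extra'' stones with the basic value-preserving lemmas, in the style of the earlier ``prefer'' results. First I would note that the relabelling in the proof of Theorem~\ref{thm:mufill} is symmetric in which player moves next, so that theorem actually gives $P^{\overline{X}} \equiv (P + a_X + b_{\overline{X}})^{\overline{X}}$. Writing $R = P - c + a_X$ and $Q = P + a_X$ (so $Q = R + c_X$ and $Q + b_{\overline{X}} = P + a_X + b_{\overline{X}}$), it then suffices to prove $R^{\overline{X}} \ge_X (Q + b_{\overline{X}})^{\overline{X}}$.

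Next I would complete the half-capture at $b$ into a full capture of $B$. Since $B$ is $\overline{X}$-captured in $Q$ and $b \in B$, applying Theorem~\ref{thm:oppocell} (in its $\overline{X}$ form) once for each cell of $B \setminus \{b\}$ --- adding that cell as an $\overline{X}$-stone and flipping the player to move --- and using that $|B \setminus \{b\}| = |B| - 1$ is odd, one reaches $(Q + B_{\overline{X}})^{X} \ge_X (Q + b_{\overline{X}})^{\overline{X}}$. By Corollary~\ref{cor:capfill} (applied to $\overline{X}$), $(Q + B_{\overline{X}})^{X} \equiv Q^{X}$, hence $Q^{X} \ge_X (Q + b_{\overline{X}})^{\overline{X}}$. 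Finally, since $Q = R + c_X$ with $c$ uncolored in $R$, one more application of Theorem~\ref{thm:oppocell} (again in its $\overline{X}$ form: remove the $X$-stone at $c$ and flip to $\overline{X}$) gives $R^{\overline{X}} \ge_X (R + c_X)^{X} = Q^{X}$. Chaining, $R^{\overline{X}} \ge_X Q^{X} \ge_X (Q + b_{\overline{X}})^{\overline{X}} \equiv P^{\overline{X}}$, which is the claim.

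The step I expect to be the main obstacle is the middle one: one cannot simply erase the opponent's stone at $b$ (that would be the converse of Theorem~\ref{thm:oppocell}, which fails in general), so the key idea is instead to \emph{grow} $b_{\overline{X}}$ into the full $\overline{X}$-capture $B_{\overline{X}}$ --- each added opponent stone, together with its turn flip, being a legitimate application of Theorem~\ref{thm:oppocell} --- and only then undo it with Corollary~\ref{cor:capfill}. The bookkeeping point to get right is the parity of the $|B|-1$ turn flips, since that is exactly what makes $(Q + B_{\overline{X}})^{X}$ land with $X$, not $\overline{X}$, to move, keeping the closing application of Theorem~\ref{thm:oppocell} available.
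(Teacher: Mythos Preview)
Your argument is correct and follows the same skeleton as the paper: invoke Theorem~\ref{thm:mufill} to pass to $(P+a_X+b_{\overline{X}})^{\overline{X}}$, reduce to $(P+a_X)^X$ using the $\overline{X}$-capture of $B$, and finish with Theorem~\ref{thm:oppocell} to strip off $c_X$. The only difference is in the middle passage: the paper keeps equivalences throughout by first filling the captured set $B\setminus\{b,b'\}$ (where $b'$ is the mate of $b$), observing that $b'$ is then dead so that $\overline{X}$-coloring it flips the turn while preserving the winner, and then undoing the full capture $B_{\overline{X}}$; you instead grow $b_{\overline{X}}$ to $B_{\overline{X}}$ one cell at a time with Theorem~\ref{thm:oppocell}, obtaining inequalities and tracking the $|B|-1$ turn flips to land at $(P+a_X+B_{\overline{X}})^X$. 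Your route is arguably cleaner in that it avoids introducing the mate $b'$ and the separate deadness step, at the cost of yielding only $\ge_X$ rather than $\equiv$ along the way---which is all that is needed here.
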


\begin{proof}
Define $b'$ to be the mate of b in the $\overline{X}$-capture strategy for $B$ in $(P+a_X)$.
\begin{align*}
P^{\bar{X}}
&\equiv 
&(P+a_X+b_{\bar{X}})^{\bar{X}}
&\text{~ ~ ~ ~ (Theorem~\ref{thm:mufill})}\\
&\equiv 
&(P+a_X+B_X-b'_{\bar{X}})^{\bar{X}}
&\text{~ ~ ~ ~ (filling captured cells, now $b'$ dead)} \\
&\equiv 
&(P+a_X+B_X)^X
&\text{~ ~ ~ ~ (coloring $b'$)}\\
&\equiv 
&(P+a_X)^X 
&\text{~ ~ ~ ~ (capture)}\\
&\leq_X 
&(P-c+a_X)^{\bar{X}}
& \text{~ ~ ~ ~ (Theorem~\ref{thm:oppocell})}\\
\end{align*}
\qed\end{proof}


\begin{figure}[htb]\centering
\hfill\hfill\includegraphics[scale=1]{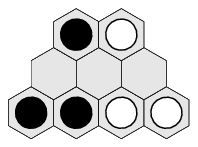}\hfill
\includegraphics[scale=1]{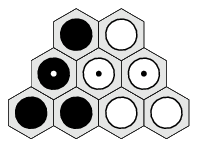}\hfill
\includegraphics[scale=1]{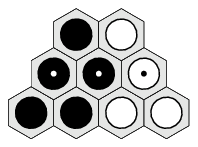}\hfill
\includegraphics[scale=1]{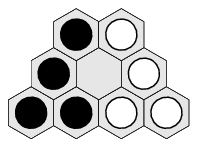}\hfill\hfill\ 
\caption{Mutual fillin.
If B colors left cell, the other two cells are W-captured. 
If W colors right cell, the other two cells are B-captured.
So we can replace first position with this.}
\label{fig:mufill}  %
\end{figure}

\begin{figure}[htb]\centering
\includegraphics[scale=1]{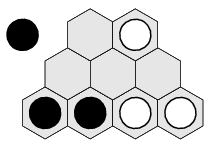}\
\includegraphics[scale=1]{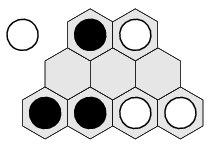}\
\includegraphics[scale=1]{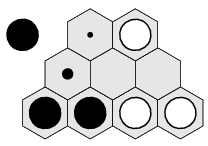}\
\includegraphics[scale=1]{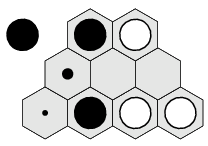}\
\includegraphics[scale=1]{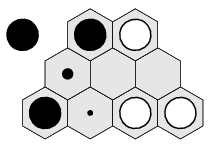}\
\caption{Mutual fillin domination. Off-board stone shows B to play.
Black move would create mutual fillin pattern.
So, for these three states, Black prefers large dot to small.}
\label{fig:caprev}  %
\end{figure}

Finally, we mention join pairing strategies.

\begin{theorem}
For a state $S=P^X$ with an $\overline{X}$-join pairing $\Pi$,
$X$ wins $P^X$.
\end{theorem}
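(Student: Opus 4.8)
The plan is to show that, for $S=P^X$, a $\Pi$ pairing strategy for $X$ is itself a winning strategy, so that $X$ wins $S$. Such a strategy exists since each player has a pairing strategy for any pairing (shown earlier); let $\sigma$ be one, and have $X$ play it from $S$ against arbitrary play by $\overline{X}$.

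The heart of the argument concerns fully-coloured positions. Let $Z$ be any position with every cell coloured that is reachable from $S$ with $X$ following $\sigma$. By the defining property of a pairing strategy, at most one cell of each pair of $\Pi$ is $X$-coloured in $Z$; since both cells of every pair are coloured in $Z$, player $\overline{X}$ has coloured at least one cell of each pair of $\Pi$. Because $\Pi$ is an $\overline{X}$-join pairing, $\overline{X}$'s two sides are therefore joined in $Z$, so $\overline{X}$ has lost and $X$ has won $Z$.

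It remains to handle games that end before the board is full, and this is the step where I expect the only real care to be needed. Suppose the play under $\sigma$ reaches a terminal position $W$ (one in which some player's sides are joined) that is not full. Extend $W$ to a full position $Z$ by letting both players keep moving ($X$ still following $\sigma$, $\overline{X}$ arbitrarily). By the previous paragraph $\overline{X}$'s sides are joined in $Z$, so by the standard fact that a full Hex board has exactly one joined player, $X$'s sides are not joined in $Z$; since being joined is monotone under the addition of stones of either colour, $X$'s sides are not joined in the subposition $W$ either. Hence the player joined in $W$ must be $\overline{X}$, and $X$ wins. Since the board is finite and a full board always has a joined player, every play from $S$ under $\sigma$ does terminate, so $\sigma$ is winning for $X$. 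The main obstacle is thus exactly ruling out that $X$ accidentally completes her own connection first; this is settled by uniqueness of the connected player on a full board together with monotonicity of the join relation, and apart from it the proof merely unpacks the definitions of a pairing strategy and of an $\overline{X}$-join pairing, using none of the domination theorems of this section.
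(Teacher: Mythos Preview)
Your argument is correct and follows the same overall line as the paper's: have $X$ play the $\Pi$ pairing strategy, then show that in any terminal position reached this way it is $\overline{X}$ whose sides are joined. The only minor difference is in how the non-full terminal case is handled. The paper simply $\overline{X}$-colours every remaining uncolored cell of the terminal state $Z$ to obtain $Z'$, observes that this cannot change the winner, and then notes that in $Z'$ at least one cell of each pair is $\overline{X}$-coloured, so $\overline{X}$ is joined. Your route---continue play to a full board, invoke the one-winner property of Hex, then pull back via monotonicity---reaches the same conclusion with slightly more machinery; both are fine.
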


\begin{proof}
It suffices for $X$ to follow the $\Pi$ strategy.
In each terminal state $Z$ player $X$ will have colored at most
one cell of $\Pi$.
From $Z$ obtain $Z'$ by $\overline{X}$-coloring any uncolored
cells: this will not change the winner.
But in $Z'$ at least one cell of each pair of $\Pi$ is $\overline{X}$-colored,
and $\Pi$ is an $\overline{X}$-join pairing.
So in $Z'$ $\overline{X}$'s two sides are joined,
so in $Z$ $\overline{X}$'s two sides are joined.
So $X$ wins.
\qed\end{proof}

\section{Early win detection}
For a position $P$,
a {\em X-join-pairing strategy} 
is a pairing strategy
that joins $X$'s two sides,
and an {\em X-pre-join-pairing strategy}
is an uncolored cell $k$ together with
an $X$-join-pairing strategy of $P+k_X$;
here $k$ is the {\em key} of this strategy.
The key to our algorithm is to find opponent
(pre)-join-pairing strategies.
When it is clear from context that the strategies
join a player's sides, we call these simply
{\em (pre)-pairing strategies}.

\begin{theorem}
Let $P$ be a position with an $X$-join-pairing strategy.
Then ${\overline{X}}$ wins $P^X$ and also
$P^{\overline{X}}$.
\end{theorem}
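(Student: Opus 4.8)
The plan is to derive both statements from the immediately preceding theorem (read with the two players interchanged), using that, by the pairing-strategy theorem, the non-connecting player $\overline{X}$ has a pairing strategy for $\Pi$ no matter whose turn it is. Here $\Pi$ denotes the pairing of the given $X$-join-pairing strategy, so that $\Pi$ is an $X$-join pairing of $P$.

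The case $P^{\overline{X}}$ is exactly the preceding theorem with ``$X$'' and ``$\overline{X}$'' swapped: $\overline{X}$ moves first, follows the $\Pi$-pairing strategy, and thereby forces $X$'s two sides to be joined in every terminal position, so $X$ loses and $\overline{X}$ wins. For $P^X$ I would simply rerun that argument, observing that it never used who moved first. Explicitly: by the pairing-strategy theorem $\overline{X}$ has a $\Pi$-pairing strategy for the state $P^X$; let $\overline{X}$ follow it, and let $Z$ be any terminal position reachable. Then $\overline{X}$ has colored at most one cell of each pair of $\Pi$, so $X$-coloring all still-uncolored cells of $Z$ produces a full board in which every pair of $\Pi$ has an $X$-colored cell; as $\Pi$ is an $X$-join pairing, $X$'s two sides are joined there. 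Since adding stones cannot break a connection, if $\overline{X}$'s sides were joined in $Z$ they would be joined in this full board too, giving a full Hex board with both players connected, which is impossible; so $\overline{X}$'s sides are not joined in $Z$, and, $Z$ being terminal, $X$'s are --- i.e.\ $X$ loses. Hence $\overline{X}$ wins $P^X$.

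Alternatively, the $P^X$ case reduces to the $P^{\overline{X}}$ case: any first move $c$ of $X$ leaves the position $P+c_X$, which still carries an $X$-join pairing --- delete from $\Pi$ the pair containing $c$ if there is one, the already-colored $c$ serving as that pair's $X$-colored representative --- so $\overline{X}$ wins $(P+c_X)^{\overline{X}}$ by the $P^{\overline{X}}$ case; true for all $c$, this gives that $\overline{X}$ wins $P^X$. The one slightly delicate step, ``so $X$'s sides are joined in $Z$'', rests on the standard Hex facts that stones never destroy a connection and that a filled board never has both players connected; but this is exactly the reasoning already used in the preceding theorem, so it poses no fresh obstacle --- everything else is relabelling plus an appeal to the pairing-strategy theorem for the player not on move.
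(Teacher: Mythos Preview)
Your argument is correct. The paper's own proof is a one-line citation of an external result (Theorem~7 of \cite{HTH12}), so you are not reproducing its approach but rather supplying a self-contained proof from the machinery already developed in this paper. Concretely, you invoke Theorem~1 (existence of a pairing strategy for either player in any state) together with the last theorem of \S\ref{s:prune} and its proof idea, plus the two standard Hex facts that adding stones never destroys an existing connection and that a full board has exactly one player joined. Both of your variants --- rerunning the terminal-position argument directly for $P^X$, and reducing $P^X$ to the $P^{\overline{X}}$ case by deleting from $\Pi$ the pair containing $X$'s first move --- are sound; the reduction is perhaps cleaner since it avoids reinvoking the full-board uniqueness fact. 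The upshot is that your proof is more informative within the paper's own framework, while the paper's version simply defers to the cited reference.
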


\begin{proof}
This follows from Theorem 7 in \cite{HTH12}:
$\overline{X}$ can force $X$ to follow the $X$-join-pairing strategy.
\end{proof}

\begin{theorem}
Let $P$ be a position with an $X$-pre-join-pairing
strategy and with $X=$ Last.
Then ${\overline{X}}$ wins $P^X$ and also
$P^{\overline{X}}$.
\end{theorem}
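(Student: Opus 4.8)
**The plan is to reduce the pre-join-pairing case to the already-proven join-pairing case by a case analysis on who moves first.** Recall the setup: $P$ has an $X$-pre-join-pairing strategy, meaning there is a key cell $k$ such that $P + k_X$ has an $X$-join-pairing strategy, and we are told $X = $ Last (so in $P$, the number of uncolored cells is odd, and after $X$ colors $k$ the number of uncolored cells in $P+k_X$ is even, which is consistent with $X$ still being Last there in the sense of ``plays last if the board fills up''). By the previous theorem, $\overline{X}$ wins both $(P+k_X)^X$ and $(P+k_X)^{\overline{X}}$. We want to conclude $\overline{X}$ wins $P^X$ and $P^{\overline{X}}$.

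First I would handle $P^{\overline{X}}$: here $\overline{X}$ moves next, and the natural move is for $\overline{X}$ to... wait — it is $X$ who needs to be coaxed into coloring $k$. Let me reorganize. For $P^X$: $X$ moves next. Either $X$ colors $k$, reaching $(P+k_X)^{\overline{X}}$, which $\overline{X}$ wins; or $X$ colors some other cell $c \ne k$. In the latter case I would like to say $\overline{X}$ responds with $k$... but then it is $(P + c_X + k_{\overline{X}})^X$ and the pairing is an $X$-join pairing on $P+k_X$, not quite on $P + c_X + k_{\overline{X}}$. The cleaner route: whatever $X$ plays first, $\overline{X}$ should aim to get to a position where an $X$-join-pairing strategy is available. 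If $X$ plays $k$, done. If $X$ plays $c\ne k$, then $\overline{X}$ plays $k$ for $X$ in the sense of invoking Theorem~\ref{thm:oppocell}: $(P + c_X + k_{\overline{X}})^X \geq_X (P + c_X)^X$ via the $c$-for-$k$ style argument, or more directly, $\overline{X}$-coloring $k$ is at least as good for $\overline{X}$ as $X$-coloring it would have been, combined with the even-fill theorem on the pair $\{c,k\}$. Concretely: $(P+c_X)^X \equiv (P + c_X + \{?\})$ — I would chain $(P+c_X)^X$ down to $(P + c_X + k_{X})^{\overline{X}}$? No: by Theorem~\ref{thm:evenfill} with the two-cell set $\{c', k\}$ as $X$ plays on, or better, use Theorem~\ref{thm:oppocell} to convert $X$'s move on $c$ into $\overline{X}$'s move.

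The actual clean argument I expect is this. For $P^X$: if $X$ colors $k$ we are done as above. If $X$ colors $c \ne k$, then $\overline{X}$ colors $k$; the resulting state is $(P + c_X + k_{\overline{X}})^X$. Now by Theorem~\ref{thm:oppocell} applied to convert $k_{\overline{X}}$ back, $(P + c_X + k_{\overline{X}})^X \geq_{\overline{X}}$-ish \dots hmm, the direction of Theorem~\ref{thm:oppocell} is $(P + c_{\overline{X}})^{\overline{Y}} \geq_X P^Y$, i.e.\ adding an opponent stone and flipping the turn helps $X$. Here I want: $\overline{X}$ wins $(P+k_X)^{\cdot}$, and I want $\overline{X}$ wins $(P + c_X + k_{\overline{X}})^X$. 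Write $\overline{X}$'s target: $(P + k_X)$ with some turn. We have $(P + c_X + k_{\overline{X}})^X$; apply Theorem~\ref{thm:oppocell} with the roles so that removing the $\overline{X}$-stone $k$ and flipping improves things for $\overline{X}$: $(P + c_X + k_{\overline{X}})^X$, treating $\overline{X}$ as ``$X$'' in that theorem's statement and $k$ as the cell, gives $(P + c_X + k_{\overline{X}})^X \geq_{\overline X} (P+c_X)^{\overline X}$. That is the wrong way too — I get a statement about $(P+c_X)^{\overline X}$, a state we have no control over. So instead I would go forward: from $(P+c_X)^{\overline X}$ (which is where we'd be if $\overline X$ had the move after $X$'s $c$), $\overline X$ colors $k$: reaching $(P + c_X + k_{\overline X})^X$, and then invoke Theorem~\ref{thm:evenfill} on the even set $\{c\}\cup\{\text{next }X\text{ move}\}$?

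Let me state the plan at the right level of abstraction and flag the obstacle rather than grinding. The strategy skeleton: $\overline X$'s plan in $P^X$ (and similarly $P^{\overline X}$) is to ensure that eventually the position $P + k_X$ is reached with $\overline X$ free to run the $X$-join-pairing strategy, treating all of $X$'s ``extra'' non-$k$ moves and $\overline X$'s paired responses as an even fill-in that, by Theorem~\ref{thm:evenfill} and Theorem~\ref{thm:oppocell}, only helps $\overline X$. Precisely, I would argue: (i) if $X$ ever colors $k$, $\overline X$ switches to the $X$-join-pairing strategy of $P+k_X$ and wins by the previous theorem; (ii) as long as $X$ avoids $k$, $\overline X$ responds to each such move $c$ by pairing it with a fresh uncolored cell (possible since $X=$ Last guarantees parity works out: the uncolored-cell count stays even after each such pair), so that no net information is lost; (iii) since the board is finite and $k$ stays uncolored, eventually either $X$ colors $k$ (case i) or the board fills except for $k$, at which point $X$ is forced to color $k$ — here is exactly where $X=$ Last is used: with $X$ playing last, the forced final move on $k$ is $X$'s, after which $\overline X$'s pairing obligations are vacuously complete and $X$'s side is joined. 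Wrapping these up and then invoking the earlier join-pairing theorem on $P + k_X$ gives that $\overline X$ wins $P^X$; the case $P^{\overline X}$ is analogous, with $\overline X$'s opening move being a throwaway pairing partner.

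\textbf{The main obstacle} is making step (ii)--(iii) rigorous: I must show $\overline X$ can always maintain the invariant ``$k$ uncolored, and outside $k$ the colored cells form an even set that is captured/neutral in the sense needed'' until $X$ is forced onto $k$, and that the parity bookkeeping (which is precisely what $X = $ Last controls) never breaks — in particular that $\overline X$ never runs out of legal pairing partners before $X$ is cornered into $k$. This is morally the same ``respond in the even set'' device as in Theorem~\ref{thm:evenfill}, lifted so that the ``even set'' is the whole board minus $k$; I expect the author's proof to either cite Theorem~7 of \cite{HTH12} again in this sharper form, or to spell out this forced-move argument. I would therefore structure the writeup as: reduce to the join-pairing theorem already proved, isolate the parity claim as the one nontrivial point, and discharge it with the Last/Notlast counting lemma established just before the theorem statement.
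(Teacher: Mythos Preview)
Your high-level idea matches the paper's. The paper's own proof is two lines: it cites Theorem~6 of \cite{HTH12} and glosses it as ``$\overline{X}$ can avoid playing the key of the pre-pairing strategy, forcing $X$ to eventually play it.'' That is exactly the mechanism you settle on after the false starts: $\overline{X}$ never touches $k$; the hypothesis $X=$ Last guarantees the parity that eventually corners $X$ onto $k$; once $k$ is $X$-colored the previous (join-pairing) theorem takes over.

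Where your writeup has a genuine gap is step~(ii). You have $\overline{X}$ respond to each non-$k$ move of $X$ by ``pairing it with a fresh uncolored cell'' --- i.e.\ ad-hoc throwaway moves --- and then \emph{switch} to the $\Pi$ strategy the moment $X$ plays $k$. That switch is not sound: by the time $X$ colors $k$, $\overline{X}$'s earlier arbitrary moves may already have $\overline{X}$-colored both cells of some pair of $\Pi$, wrecking the join-pairing in the position actually reached. Your attempts to repair this via Theorems~\ref{thm:evenfill} and~\ref{thm:oppocell} run the wrong way: adding $\overline{X}$-stones (your $d_i$'s) can only help $X$, so you cannot absorb $\overline{X}$'s pre-switch moves as a harmless even fill.

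The clean fix is simpler than any of your reduction chains: have $\overline{X}$ follow the pairing $\Pi$ \emph{from the very first move}, playing in $P$ exactly as the Notlast branch of Theorem~1 prescribes for the state $(P+k_X)^{\,\cdot}$ (treating $k$ as already $X$-colored). This strategy never asks $\overline{X}$ to touch $k$, since $k$ is not a cell of $\Pi$; the parity count (odd uncolored cells outside $\Pi$ whenever $k$ is still empty and it is $\overline{X}$'s turn) shows $\overline{X}$ can always pick a non-$k$ cell when the ``play outside $\Pi$'' clause fires. At termination at most one cell of each pair is $\overline{X}$-colored and $k$ is $X$-colored, so $X$'s sides are joined. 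No switch, no absorption lemma needed. You correctly flagged (ii) as the crux; this is the missing ingredient.
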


\begin{proof}
This follows from Theorem 6 in \cite{HTH12}. 
$\overline{X}$ can avoid playing the key of the pre-pairing strategy, 
forcing $X$ to eventually play it.
\end{proof}

\section{Solrex}
Solrex is based on Solhex, the 
Hex solver of the Benzene code repository \cite{Benzene}.
The challenge in developing Solrex was to identify and remove
any Hex-specific, or Rex-unnecessary, aspects of Solhex
--- e.g.\ permanently inferior cells apply to Hex but not Rex ---
and then add any Rex-necessary pieces.
E.g., it was necessary to replace the methods for
finding Hex virtual connections with methods that find
Rex (pre-) pairing strategies.

Search follows the Scalable Parallel Depth First variant
of Proof Number Search,
with the search focusing only on 
a limited number of children (as ranked by
the usual electric resistance model) at one time
\cite{PawH13}.

\begin{figure}[htb]\centering
\includegraphics[scale=.4]{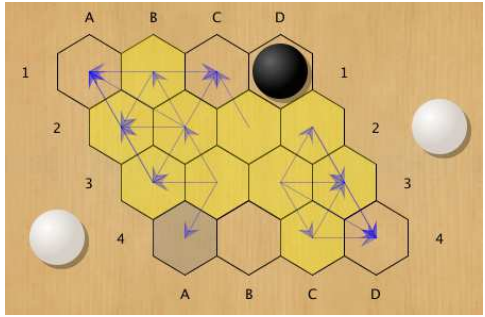}
\caption{Inferior cells of a Rex position.
Each arrow end is inferior to its arrowhead.
}
\label{fig:inf}
\end{figure}

When reaching a leaf node,
using a database of fillin and inferior cell patterns,
we apply the theorems of \S\ref{s:prune}.
We find dead cells by applying local patterns
and by searching for any empty cells whose neighbourhood
of empty cells, after stone groups have been removed
and neighbouring empty cells contracted, is a clique.
We iteratively fillin captured cells and even numbers
of dead cells until
no more fillin patterns are found.
We also apply any inferior cell domination
that comes from virtual connection decompositions 
\cite{AHH10,Hend10}.

We then look into the transposition table to see
if the resulting state win/loss value is known,
either because we previously solved, or because
of color symmetry (a state which looks the same
for each player is a win for Notlast).
Then inferior cells are pruned.
Then, using H-search \cite{Ansh00a} 
in which the or-rule
is limited to combining only 2 semi-connections,
we find (pre)-join-pairing strategies.
Then, for $X$ the player to move,
we prune each key of every $X$-pre-join-strategy.

H-search is augmented by observing
that semi-connections that overlap on a captured set
of endpoints do not conflict and so can be combined into a full
connection \cite{AHH10,Hend10}.
Notice that augmented H-search is not complete: 
some pairing strategies (e.g.\ the mirror pairing strategy
for the $n$$\times$$(n-1)$ board \cite{Gard58}) cannot be
found in this way.

Figure~\ref{fig:4x4d1} shows the start
of Solrex's solution of 1.Bd1, the
only unsolved 4$\times$4 opening from \cite{HTH12}.
First, inferior cells are found:
White b1 captures a1,a2; a2 kills a1; b2 captures a2,a3;
c2 leaves c1 dominated by b2; d2 captures d3,d4; etc.
See Figure~\ref{fig:inf}.
Only 5 White moves remain: a1,c1,a4,b4,d4.
After trying 2.Wa4, a White pre-join-pairing
strategy is found, so this loses.
Similarly, 2.Wb4 and 2.Wd4 also lose.
Now 2 White moves remain: a1,a3.
From 2.Wa1, search eventually reveals that 3.Bc1 wins
(a2 also wins).
From 2.Wc1, search reveals that 3.Ba1 wins
(b2 and d4 also win).
The deepest line in solving this position
is 1.Bd1 2.Wc1 3.Bd4 4.Wc4 5.Bb2 6.Wa3 7.Ba4 8.Wb4.

\begin{figure}[htb]\noindent
\hspace*{0cm}\includegraphics[scale=1.4]{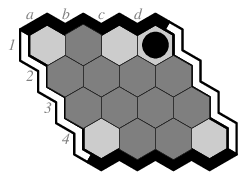}\ 
\hspace*{-.5cm}\includegraphics[scale=1.4]{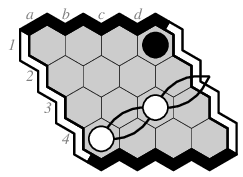}\ 
\hspace*{-.5cm}\includegraphics[scale=1.4]{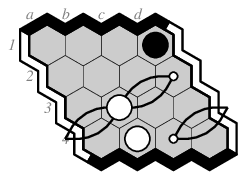}\ 
\hspace*{-.5cm}\includegraphics[scale=1.4]{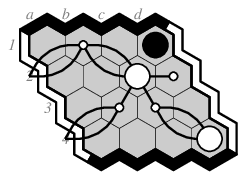}\ 
\hspace*{-.5cm}\includegraphics[scale=1.4]{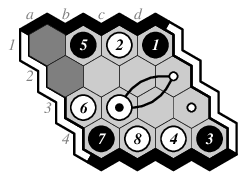}\ 
\caption{Solving 1.Bd1. Left: White inferior cells after 1.Bd1.
Then White pre-join-pair strategies after 2.Wa4,
2.Wb4, 2.Wd4. Search reveals that 2.Wa1 loses.
Search reveals that 2.Wc1 loses. So 1.Bd1 wins.
The last diagram shows the deepest line of search
and the final pre-join-pair strategy: the shaded cells
are Black-captured.}
\label{fig:4x4d1} 
\end{figure}

\section{Experiments}
We ran our experiments on Torrington,
a quad-core i7-860 2.8GHz CPU with hyper-threading, so 8 pseudo-cores.
For 5$\times$5 Rex, 
our test suite is all 24 replies to opening in the acute 
corner:\footnote{All 
opening 5$\times$5 Rex moves lose, so we picked all
possible replies to the presumably strongest opening move.}
this takes Solrex 13.2s.
For 6$\times$6 Rex,
our test suite is all 18 (up to symmetry) 1-move opening states:
this takes Solrex 25900s.
To measure speedup, we also ran the 18 1-move 6$\times$6 openings
on a single thread, taking 134635s.

To show the impact of Solrex's various features,
we ran a features knockout test on the 5$\times$5 test suite.
For features which showed negligible or negative contribution,
we ran a further knockout test on the hardest 6$\times$6 position,
1.White[d2], color-symmetric to 1.Black[e3]. 
The principle variation for this
hardest opening is shown in Figure~\ref{fig:6x6pv}.
The results are shown below.
Figure~\ref{fig:losing} shows 
all losing moves after the best opening move on 5$\times$5
(all opening 5$\times$5 moves lose),
and all losing opening moves on 6$\times$6.

Figure~\ref{fig:newpuzz} shows three new Rex puzzles
we discovered by using Solrex.
The middle puzzle was the only previously unsolved 4$\times$4 position.
The other two were found by using Solrex to search for positions with 
few winning moves.

\hfill\begin{tabular}{|c|c|}\hline
\multicolumn{2}{|c|}{\bf 5$\times$5 knockout tests} \\ \hline
version & ~ time ratio \\ \hline
all features on & 1.0 (13.9s)\\
no dead clique cutset & .97 \\
unaugmented H-search & .99 \\
no mutual fillin & 1.00 \\
no color symmetry pruning & 1.01 \\
no VC decomp & 1.06 \\
no dead fillin & 1.07 \\
no resistance move ordering ~ & 1.62 \\
no capture fillin & 2.02 \\
no inferior pruning & 2.30 \\
no H-search & 89.83 \\ \hline
\end{tabular}\hfill~

\hfill\begin{tabular}{|c|c|}\hline
\multicolumn{2}{|c|}{\bf 6$\times$6 knockout test} \\ \hline
version & ~ time ratio \\ \hline
all features on & 1.0 (13646 s)\\
unaugmented H-search &  1.10 \\ 
no color symmetry pruning & 1.13   \\ 
no dead clique cutset & 1.37 \\ 
no mutual fillin & 1.44 \\ 
no VC decomp & 1.95 \\ \hline 
\end{tabular}\hfill~

\begin{figure}[htb]\centering
\includegraphics[scale=2]{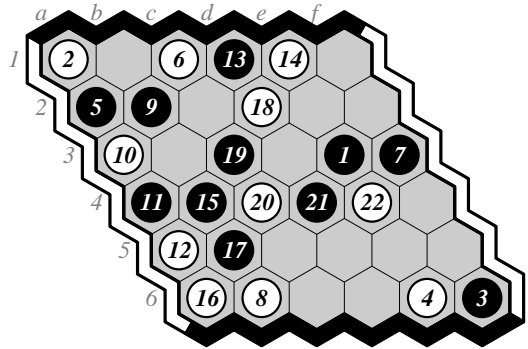}
\caption{Principle variation of 1.Black[e3], 
hardest 6$\times$6 opening. From here Black
forces White to connect with pairs \{C4,C5\} \{D6,E5\}\{F4,F5\}
and last cell D5.}
\label{fig:6x6pv}
\end{figure}

\begin{figure}[htb]\centering
\includegraphics[scale=1.5]{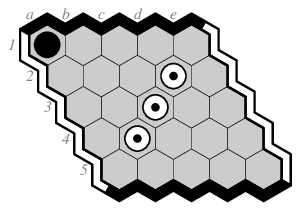}\
\includegraphics[scale=1.5]{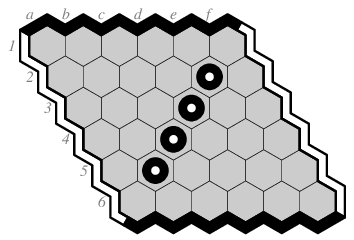}
\caption{Left: all losing replies. Right: all losing openings.}
\label{fig:losing}
\end{figure}

\begin{figure}[htb]\centering
\includegraphics[scale=1.5]{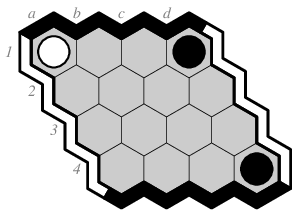}\
\includegraphics[scale=1.5]{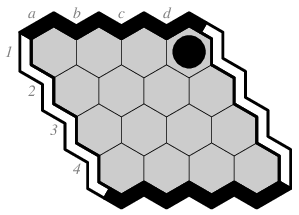}\
\includegraphics[scale=1.5]{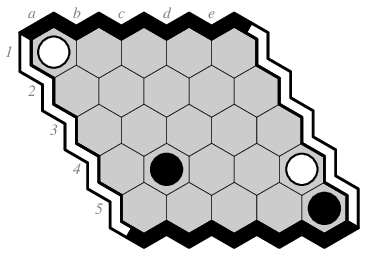}\
\caption{Three new Rex puzzles.
Left: White to play: find the only winning move. ~ Middle: White to play: find White's best move, and Black's best reply. ~ Right: White to play: find the only winning move.}
\label{fig:newpuzz}  
\end{figure}


\section{Conclusions}
All features listed in the knockout tests
contributed significantly to shortening search
time: the four features that contributed no improvement
on 5$\times$5 boards all contributed significantly
on 6$\times$6 boards. 
The effectiveness of these pruning methods --
which exploit pruning via local patterns in a search
space that grows exponentially with board size ---
explained by Henderson for Hex, is clearly
also valid for Rex \cite{Hend10}:
{\em\noindent\begin{quotation}
In almost all cases, we see that feature contributions improved
with board size. We believe this is partly because the
computational complexity of most of our algorithmic improvements is
polynomial in the board size, while the corresponding increase in
search space pruning grows exponentially.
Furthermore, as the average game length increases, more weak
moves are no longer immediately losing nor easily detectable via
previous methods, and so these features become more likely to save
significant search time.
\end{quotation}}

Of these features, by far the most critical was H-search,
which yielded a time ratio of about 90 on 5$\times$5 Rex when omitted.
The enormous time savings resulting from H-search is presumably
because our general search method does not learn to recognize
the redundant transpositions that correspond to the discovery
of a (pre-) pairing strategy. So H-search avoids 
some combinatorial explosion.

Solrex takes about 7 hours to  solve all 18 (up to symmetry) 
6$\times$6 boardstates;
by contrast, Solhex takes only 301 hours to solve all 32 (up to
symmetry) 8$\times$8 boardstates \cite{HAH09b}.
So why is Solhex faster than Solrex?

One reason is because Hex games tend to be shorter than Rex games:
in a balanced Rex game, the loser can often force the winner to
play until the board is nearly full.
Another reason is there are Hex-specific pruning features that
do not apply to Rex: for example, the only easily-found virtual
connections for Rex that we know of are pairing strategies,
and there seem to be far fewer of these than there are 
easily-found virtual connections in Hex.
Also, in Hex, if the opponent can on the next move create
more than one winning virtual connection, then the player must
make a move which interferes with each such connection or lose the
game; we know of no analogous property for Rex.

The general approach of Solhex worked well for Solrex,
so this approach might work for other games,
for example connection games such as Havannah or Twixt.

\subsubsection*{Solutions to puzzles.} 
Evans' puzzle: b1 (unique).
Three new puzzles: Left: a2 (unique). Middle: Black wins; best move for White is a1, which leaves Black with only 2 winning replies (a2, c1); all other White moves leave Black with at least 3 winning replies (e.g. c1 leaves a1, b2, d4). Right: e3 (unique).

\subsubsection*{Acknowledgments.}
We thank Jakub Pawlewicz for helpful comments.

\FloatBarrier
\bibliographystyle{plain}
\bibliography{hex}
\end{document}